\documentclass{article}

\usepackage[preprint]{neurips_2026}

\usepackage[utf8]{inputenc}
\usepackage[T1]{fontenc}
\usepackage[hidelinks]{hyperref}
\usepackage{url}
\usepackage{booktabs}
\usepackage{amsmath,amssymb}
\usepackage{amsthm}
\usepackage{mathtools}

\newtheorem{proposition}{Proposition}
\newtheorem{lemma}{Lemma}
\newtheorem{assumption}{Assumption}
\newtheorem{definition}{Definition}
\usepackage{microtype}
\usepackage{xcolor}
\usepackage{graphicx}
\usepackage{enumitem}
\usepackage{tikz}
\usetikzlibrary{positioning, arrows.meta, calc}

\title{Completion vs Optimality:\\
Policy Gradient in Long-Horizon\\Cumulative-Damage Problems}

\author{%
  Wolfgang Maass\\
  Saarland University \& German Research Center for Artificial Intelligence (DFKI)\\
  \texttt{wolfgang.maass@iss.uni-saarland.de}
  \And
  Sabine Janzen\\
  German Research Center for Artificial Intelligence (DFKI)\\
  \texttt{sabine.janzen@dfki.de}
}

\begin{document}
\maketitle

\begin{abstract}
Long-horizon decision problems with cumulative damage couple
locally attractive actions to globally adverse outcomes. We
identify two orthogonal failure modes for policy-gradient methods on
this class and propose a decomposition that separates them:
\emph{completion} (reaching the terminal horizon rather than exiting
via an implicit terminal constraint) and \emph{optimality} (matching
the dynamic-programming reference given completion). Under PPO with a
linear soft penalty, granting horizon access alone reduces the
completion rate: the penalty's equilibrium drives the dominant-activity
share to zero, while action-space restriction combined with horizon
access achieves completion but leaves an optimality gap
($\Delta M_{\text{final}} = 0.271$) that we trace to first-phase
greedy commitment at the damage origin. We derive four testable
predictions and evaluate them in two separately calibrated
environments that share the same abstract structure but differ in
domain, horizon, activity set, and calibration data: a 49-step
bricklayer career and a 20-season NBA power-forward career.
All four predictions replicate qualitatively. The horizon-invariance
prediction is met at three of four tested horizons, with the
exception at $H = 15$ consistent with the $H^*$ boundary
($H^* \in [6, 14]$ under the NBA parameters).
\end{abstract}

\section{Introduction}\label{sec:intro}

Many sequential decision problems accumulate state over long horizons
in a way that couples locally attractive actions to globally adverse
outcomes. A clinician dosing a chronic medication, an
engineer scheduling heavy-duty use of a wearing component, and a
worker allocating physical effort across a decades-long career all
face the same structural challenge: the action that maximises
immediate reward increments a cumulative state whose eventual level
terminates the episode. Whether a policy-gradient agent
\citep{schulman2017ppo} can learn the prudent policy in such problems
is a question about credit assignment over horizons that exceed any
rollout-based learner's experience. Recent long-horizon
credit-assignment work \citep{arjonamedina2019,harutyunyan2019} has
identified the structural difficulty but not the specific failure
mode that greedy credit assignment induces when the damage signal is
implicit in the terminal condition.

\paragraph{The diagnostic gap.}
A policy with low return on a long-horizon cumulative-damage problem
may be failing for either of two unrelated reasons. A policy that
terminates early through the implicit terminal constraint has low
return because it never reaches late-horizon states. A policy that
reaches the terminal horizon but commits to locally greedy actions
has low return because it fails to preserve the cumulative state.
Each failure mode calls for a different intervention: horizon access
addresses the former, while credit assignment targets the latter.
A return-based evaluation, however, conflates both onto a single
scalar, forcing a choice between diagnosing \emph{why} the policy
exits early and diagnosing \emph{how suboptimal} it is given that
the episode completes, even when both failures are present
simultaneously and require separate remedies.

\paragraph{This paper.}
We evaluate policies on two independent axes.
\emph{Completion} is whether the policy reaches the terminal horizon.
\emph{Optimality} is whether the policy matches the fixed-share
dynamic-programming reference \citep{puterman1994} given completion.
When the constraint is \emph{implicit} in the terminal condition, no
per-step cost is available, unlike in standard CMDPs
\citep{altman1999,achiam2017,chow2018,garcia2015,ray2019}, and the
standard decomposition does not apply. A principled evaluation
requires making both failure modes separately visible.
For the optimality axis we provide an analytical account: in
Appendix~\ref{sec:synthetic-mdp} we derive, on a minimal
cumulative-damage MDP, a sufficient condition under which the
\emph{expected} first policy-gradient update at the damage origin
points toward the greedy action
(Proposition~\ref{prop:step-zero}). The proposition governs a
binary-action skeleton. Its connection to the multi-activity testbeds
is structural analogy and motivating context, not formal derivation.
We evaluate in two environments (a 49-step bricklayer career and a
20-season NBA power-forward career) sharing the abstract structure
of \S\ref{sec:problem} but differing in activity set, horizon, load
model, and calibration data. In both, PPO on the real environment
fails completion. A Dyna \citep{sutton1990dyna} variant with horizon
access and restricted action space achieves completion but fails
optimality; backward induction achieves both.

\paragraph{Contributions.}
\begin{itemize}[leftmargin=1.5em]
\item A formalisation of the horizon-mismatched cumulative-damage
problem class (\S\ref{sec:problem}), covering latent damage state,
delayed proxy signal, dominant-activity structure, and implicit
role-viability constraint, together with the completion-vs-optimality
decomposition (\S\ref{sec:decomposition}).

\item An empirical three-way comparison (PPO-real, fixed-share
Dyna, fixed-share dynamic programming) that separately quantifies
completion and optimality gaps in two environments
($\Delta M_{\text{final}} = 0.271$ bricklayer,
$0.150$ NBA; 95\% bootstrap CI $[0.148, 0.151]$).

\item A causal analysis showing that, under PPO with a linear soft
penalty, horizon access alone is harmful
(mean exit age $24.7$ vs $27.8$, $p = 0.028$) and action-space
restriction is the intervention that unlocks completion (\S\ref{sec:completion}).

\item A basin-of-attraction analysis identifying first-phase greedy
commitment as the optimality-failure mechanism, with basin entry
within the first 1\% of training regardless of horizon length
(\S\ref{sec:optimality}).
\end{itemize}

\section{Related Work}\label{sec:related}

\paragraph{Constrained MDPs and safe RL.}
The constrained-MDP formulation \citep{altman1999} and its variants
\citep{achiam2017,chow2018} treat the constraint as an explicit,
per-step observed quantity. Safe-exploration methods
\citep{garcia2015,ray2019} treat safety violations as continuous
costs throughout the rollout. Our setting differs structurally: the
terminal condition functions as the constraint, discovered from
rollouts rather than given as a running cost, so standard CMDP
methods cannot diagnose the completion failure we identify.

\paragraph{Policy-gradient convergence.}
A recent theoretical line \citep{mei2020,agarwal2021,bhandari2024}
characterises non-convex basins in policy space. Our step-0 basin
entry instantiates this regime at the damage-state origin, where
immediate reward dominates the Lipschitz tail of the continuation
value. Proposition~\ref{prop:step-zero} is derived from first
principles on a minimal binary-action MDP. Its connection to the
general convergence literature is structural analogy, not formal
derivation. This literature speaks to our optimality axis but is
silent on completion, typically assuming fixed horizons.

\paragraph{Long-horizon credit assignment, reward shaping, and offline methods.}
Methods that rebind credit to earlier actions
\citep{harutyunyan2019,arjonamedina2019} target a mechanism that can
bind either axis. Our decomposition isolates \emph{which} axis is the
binding constraint. Potential-based reward shaping \citep{ng1999} can
alter the gradient landscape near initialisation. Return-conditioned
methods such as Decision Transformer \citep{chen2021dt} might select
trajectories avoiding early termination but require demonstrations and
inherit distribution-shift problems in the sparse late-horizon regime.
Neither class is evaluated here.

\paragraph{Afferent signal design.}
The proxy signal $S_t$ used in the testbeds is a fixed sensitisation
function calibrated to epidemiological data. An alternative not
pursued here is to learn or evolve the afferent signal end-to-end
rather than fixing it. The proxy irrelevance finding
(\S\ref{sec:completion}) is a diagnostic of what fixed afferents
cannot provide, and whether adaptive afferent architectures close
this gap without action-space restriction is an open question.

\section{Problem Class, Decomposition, and Algorithmic Analysis}\label{sec:problem}

\subsection{Problem class}\label{sec:problem-class}

We study finite-horizon MDPs with cumulative damage, specified by
$(\mathcal{S}, \mathcal{A}, T, r, H, \mathcal{E})$, where
$\mathcal{S}$ contains a \emph{cumulative damage variable}
$D_t \in [0,1]$ ($D_{t+1} = D_t + f(s_t, a_t)$, $f \geq 0$) and a
\emph{secondary preservation variable} $M_t \in [0,1]$,
$M_0 = 1$; reward is locally maximised by actions with
$f(s_t,a_t) > 0$; and $\mathcal{E}$ is a set of terminal conditions
that exit the episode when cumulative damage crosses a
state-dependent threshold or when role-viability is violated.
The key distinction from standard CMDPs
\citep{guin2022,wachi2024} is that the constraint in $\mathcal{E}$
is \emph{never observed as a running cost}: the learner discovers
the terminal boundary only by reaching it. We call such a problem
\emph{horizon-mismatched} when the greedy policy's effective episode
length is strictly less than $H$.

\begin{assumption}[Cumulative-damage structure]
\label{asm:cds}
A cumulative-damage MDP $\mathcal{M}$ satisfies:
(i)~\textbf{Monotone damage}: $f(s,a) \geq 0$ for all $(s,a)$, so
$D_t$ is non-decreasing along any trajectory.
(ii)~\textbf{Greedy-damage alignment}: the reward-maximising action
satisfies $f(s,\arg\max_a r(s,a)) > 0$.
(iii)~\textbf{Implicit terminal boundary}: $\mathcal{E}$ is
non-empty and is \emph{avoided} with positive probability only by
policies under which the greedy action is not always selected.
\end{assumption}

Beyond Assumption~\ref{asm:cds}, the instances we study share five
structural features. The first two concern what the agent observes
and the structure of its action space: (a)~\textbf{latent damage state}: $D_t$ and
the secondary preservation variable $M_t$ are unobserved; the agent
receives only a proxy signal $S_t = g(D_t)$ that is
uninformative at the low-damage levels the greedy policy visits, and
therefore no observation-based warning reaches the agent before the implicit
boundary. We use a fixed sensitisation function calibrated to
epidemiological data as a diagnostic baseline. (b)~\textbf{dominant-activity structure}: one activity is
simultaneously the unique reward maximiser and the unique damage
maximiser, with $k-1$ lower-reward, lower-damage alternatives
creating a substitution structure whose unconstrained optimum
conflicts with the role-viability constraint~(c).
The remaining three govern how the environment enforces and penalises
damage: (c)~\textbf{role-viability constraint}: the episode
terminates if the trailing-window dominant-activity share falls below
$\alpha$, operating in the share dimension of the action space;
(d)~\textbf{capacity-damage feedback}: per-step reward is suppressed
by $h(D_t, S_t)$, silent below $D_{\text{clin}}$ and progressive
above it; and (e)~\textbf{self-amplifying secondary channel}: $M_t$
degrades at an accelerating rate below $M_{\text{amp}} = 0.6$,
making early maximum-effort trajectories irreversibly worse than
proactive low-effort ones. Without~(e) the optimality gap
vanishes: DP and the greedy policy produce similar $M_{\text{final}}$
at matched career lengths; both testbeds use $M_{\text{amp}} = 0.6$.

\subsection{The completion-vs-optimality decomposition}\label{sec:decomposition}

\begin{definition}[Completion]
\label{def:completion}
A policy $\pi$ achieves \emph{completion} if the probability of
reaching any element of $\mathcal{E}$ before step $H$ is zero: the
expected episode length equals $H$.
\end{definition}

\begin{definition}[Optimality]
\label{def:optimality}
Given completion, a policy $\pi$ achieves \emph{optimality} if
$V^{\pi}(s_0) = V^{\pi^{\ast}}(s_0)$, where $\pi^{\ast}$ is the
backward-induction reference optimised over the same action subspace
as $\pi$. Optimality is undefined for policies that fail completion.
\end{definition}

These definitions induce a three-outcome ordinal scale
(Table~\ref{tab:framework}). The \emph{optimality gap} is
$\Delta M_{\text{final}} = M^{\text{DP}}_{\text{final}} -
M^{\pi}_{\text{final}}$. Definition~\ref{def:optimality} is stated
in terms of $V^\pi$. $\Delta M_{\text{final}}$ serves as a
domain-specific surrogate for the value gap, justified by
feature~(e): under the self-amplifying secondary channel and the
fixed-share constraint, $M_{\text{final}}$ is a monotone function of
$V^\pi$ given completion, so $V^\pi < V^{\pi^*} \Leftrightarrow
M^\pi_{\text{final}} < M^{\pi^*}_{\text{final}}$. Accordingly,
$\Delta M_{\text{final}} = 0$ corresponds to Cell~C and
$\Delta M_{\text{final}} > 0$ to Cell~B.

\begin{table}[t]
\caption{Three-outcome scale induced by Definitions~\ref{def:completion}
and~\ref{def:optimality}. The scale is strictly ordered
Cell~A $\prec$ Cell~B $\prec$ Cell~C. Optimality is only
meaningful once completion is achieved; Cell~A vs.\ Cell~B/C
comparisons on $M_{\text{final}}$ confound truncation with
suboptimality and are not made in this paper.}
\label{tab:framework}
\centering
\begin{tabular}{clp{5.8cm}}
\toprule
\textbf{Cell} & \textbf{Completion} & \textbf{Optimality} \\
\midrule
A & fails   & undefined (episode truncated before late-horizon states; $M_{\text{final}}$ artificially high) \\
B & achieves & fails ($V^{\pi} < V^{\pi^{\ast}}$, optimality gap $> 0$) \\
C & achieves & achieves ($V^{\pi} = V^{\pi^{\ast}}$, optimality gap $= 0$) \\
\bottomrule
\end{tabular}
\end{table}

\begin{figure}[t]
\centering
\resizebox{\textwidth}{!}{%
\begin{tikzpicture}[
  font=\scriptsize,
  block/.style={draw, rounded corners=1.5pt, minimum width=2.8cm,
                minimum height=0.55cm, align=center, inner sep=2pt},
  train/.style={block, fill=orange!25, draw=orange!75!black, thick},
  frozen/.style={block, fill=cyan!18, draw=cyan!60!black, thick},
  env/.style={block, fill=gray!12, draw=gray!50},
  outcome/.style={block, fill=yellow!28, draw=olive!50!black,
                  font=\scriptsize\bfseries},
  arr/.style={-{Stealth[length=2mm]}, semithick},
  laneheader/.style={font=\bfseries\scriptsize}
]

\def\colA{0}
\def\colB{4.2}
\def\colC{8.4}
\def\vsep{0.18cm}

\node[laneheader] (hA) at (\colA, 0) {(A) PPO-real};
\node[laneheader] (hB) at (\colB, 0) {(B) Fixed-share Dyna};
\node[laneheader] (hC) at (\colC, 0) {(C) DP-optimal};

\node[env, below=\vsep of hA] (eA) {Environment\\\tiny latent $D_t, M_t$};
\node[env, below=\vsep of hB] (eB) {Environment\\\tiny latent $D_t, M_t$};
\node[env, below=\vsep of hC] (eC) {Environment\\\tiny full state $(D,M,t)$};

\node[env, below=\vsep of eA] (obsA) {Observation\\\tiny proxy $S_t{=}g(D_t)$};
\node[env, below=\vsep of eB] (obsB) {Observation\\\tiny proxy $S_t$};
\node[env, below=\vsep of eC] (obsC) {Direct access\\\tiny $(D,M,t)$ on grid};

\node[train,  below=\vsep of obsA] (shA) {Share head\\\tiny learned via PG};
\node[frozen, below=\vsep of obsB] (shB) {Share head\\\tiny \texttt{fixed\_share}: DP-clamped};
\node[frozen, below=\vsep of obsC] (shC) {Share allocation\\\tiny backward induction};

\node[train,  below=\vsep of shA] (efA) {Effort head\\\tiny learned via PG};
\node[train,  below=\vsep of shB] (efB) {Effort head\\\tiny learned via PG};
\node[frozen, below=\vsep of shC] (efC) {Effort policy\\\tiny backward induction};

\node[env, below=\vsep of efA] (hvA) {Role-viability gate\\\tiny active: terminates early};
\node[env, below=\vsep of efB] (hvB) {Role-viability gate\\\tiny \texttt{no\_exit}: bypassed};
\node[env, below=\vsep of efC] (hvC) {Backward pass\\\tiny full horizon $H$};

\node[outcome, below=\vsep of hvA] (oA) {Cell A\\\tiny fails completion ($27.8$ yr)};
\node[outcome, below=\vsep of hvB] (oB) {Cell B\\\tiny completion; $\Delta M_{\text{final}}{=}0.271$};
\node[outcome, below=\vsep of hvC] (oC) {Cell C\\\tiny completion $+$ optimality};

\foreach \a/\b in {eA/obsA, obsA/shA, shA/efA, efA/hvA, hvA/oA,
                   eB/obsB, obsB/shB, shB/efB, efB/hvB, hvB/oB,
                   eC/obsC, obsC/shC, shC/efC, efC/hvC, hvC/oC}
  \draw[arr] (\a) -- (\b);

\node[train,   anchor=north west, minimum width=1.9cm, minimum height=0.45cm]
  (lg1) at ($(oA.south west) + (-0.3,-0.35)$) {\tiny Trainable (PG)};
\node[frozen,  right=0.15cm of lg1, minimum width=1.9cm, minimum height=0.45cm]
  (lg2) {\tiny Frozen / clamped};
\node[env,     right=0.15cm of lg2, minimum width=2.0cm, minimum height=0.45cm]
  (lg3) {\tiny Environment / fixed};
\node[outcome, right=0.15cm of lg3, minimum width=1.5cm, minimum height=0.45cm]
  (lg4) {\tiny Outcome};

\end{tikzpicture}
}%
\caption{Three-way method-comparison architecture. Columns share the
pipeline state $\to$ observation $\to$ share/effort heads $\to$
role-viability gate $\to$ outcome and differ only in which components
are trainable (orange) vs.\ clamped (cyan). \texttt{fixed\_share}
replaces the trainable share head; \texttt{no\_exit} bypasses the
role-viability gate.}
\label{fig:method-arch}
\end{figure}

\subsection{Mechanism analysis: backward induction vs.\ policy gradient}\label{sec:dp-vs-pg}

\paragraph{Backward induction achieves both axes.}
DP evaluates every $(D, M, t)$ grid point backward from $H$, and therefore
$\mathcal{E}$ is avoided \emph{a priori} and the capacity-damage
feedback~(d) is integrated over the full remaining
horizon. At $(D_0\!=\!0, M_0\!=\!1, t\!=\!0)$, sustained low effort
is identified as the unique trajectory keeping $D$ below
$D_{\text{clin}}$. Latent state~(a) is moot under direct grid access
(Figure~\ref{fig:method-arch}, column~C).

\paragraph{Policy gradient fails completion.}
Under Assumption~\ref{asm:cds}(ii)--(iii), the greedy policy reaches
$\mathcal{E}$ before $H$. When horizon access is granted via
\texttt{no\_exit}, the linear soft penalty's per-step cost outweighs
the dominant activity's reward advantage, and therefore the share head's
unconstrained optimum places mass at zero on the dominant activity,
violating the constraint the hard exit would have enforced.
Completion failure arises from two interacting causes: the implicit
terminal boundary and the uninformative afferent in the greedy
policy's operating damage range, addressable via action-space
restriction (\texttt{fixed\_share}, studied here) or by improving
the afferent's informativeness in the policy's operating damage
range. Clamping shares to a
constraint-satisfying allocation structurally prevents role-viability
violations regardless of whether horizon access is granted.

\paragraph{Policy gradient fails optimality.}
Even with completion achieved, the dominant-activity structure~(b)
creates a first-phase commitment at $D = 0$: the reward gap
$e_H^\beta - e_L^\beta$ is large while damage cost is zero, and therefore the
first policy-gradient update (in expectation) commits to maximum
effort on the dominant activity.
Proposition~\ref{prop:step-zero} (Appendix~\ref{sec:synthetic-mdp})
formalises this as a sufficient condition on the \emph{expected}
gradient direction in a binary-action minimal MDP. The
self-amplifying channel~(e) then ensures that once $M$ falls
below $M_{\text{amp}}$, the degradation cascade is irreversible.

\subsection{Testable predictions}\label{sec:predictions}

The analysis of \S\ref{sec:dp-vs-pg} yields four testable predictions
about any cumulative-damage MDP satisfying Assumption~\ref{asm:cds}
with features (a)--(e), under PPO with a linear soft penalty for the
completion-axis predictions (P1--P2). The proxy signal~(a)
contributes nothing detectable to completion because its informative
range ($D > D_{\text{clin}}$) is disjoint from the greedy policy's
operating damage range (confirmed empirically in \S\ref{sec:completion}).
Backward induction achieves both axes by construction and occupies
Cell~C; it serves as the upper-bound reference throughout.

\begin{enumerate}[label=\textbf{P\arabic*},leftmargin=2.5em]
\item \label{pred:completion-fail}
\textbf{PG-real fails completion.} Policy gradient trained on real
rollouts with $\mathcal{E}$ enforced exits early, occupying Cell~A.

\item \label{pred:harmful-horizon}
\textbf{Horizon access alone is harmful} (under PPO with linear soft
penalty): granting horizon access without action-space restriction
shortens careers relative to PG-real, because the penalty's
equilibrium under~(c) drives the dominant-activity share below
$\alpha$.

\item \label{pred:reactive-attractor}
\textbf{Reactive-decline attractor.} Under completion and given the
self-amplifying structure~(e), the dominant attractor is maximum
effort at $t = 0$, monotone decline, $M$ irreversibly below
$M_{\text{amp}}$, and positive $\Delta M_{\text{final}}$.

\item \label{pred:horizon-invariance}
\textbf{Horizon invariance.} The reactive attractor's dominance rate
and $\Delta M_{\text{final}}$ are invariant across training horizons
$H \leq H^*$, because basin entry occurs at step~0.
\end{enumerate}

\section{Testbeds and Headline Results}\label{sec:testbeds}

We test P1--P4 in two cumulative-damage environments. Both satisfy
Assumption~\ref{asm:cds} and features (a)--(e) but differ in
activity set, horizon, load model, biological calibration data, and
role-viability parameters. The
two environments share the same simulation engine, the HMS regulation
pathway, and the Baratz amplification function. The replication
therefore tests whether the findings generalise across
parameter regimes of the same problem class, not across
independently developed simulators, a limitation we return to in
\S\ref{sec:discussion}.

The \textbf{bricklayer} testbed models a 49-year career over $k=7$
construction activities, calibrated to a bio-physical model of the
human knee \citep{coggon2000,jensen2008,rytter2009,baratz1986,arendtnielsen2010}
(Appendix~\ref{sec:bio-params}). The \textbf{NBA power-forward}
testbed models a 20-season career over 6 activities, calibrated to
the sports-medicine literature \citep{drakos2010,wiggins2016}
(Appendix~\ref{sec:nba-params}).

Figure~\ref{fig:three-way} and Table~\ref{tab:decomposition}
summarise the headline three-way contrast. The DP reference is
computed in under two hours on a single CPU
(Appendix~\ref{sec:hyperparams}), and the gap cannot therefore be
attributed to the cost of computing the DP reference.

\begin{figure}[t]
\centering
\includegraphics[height=7cm,keepaspectratio]{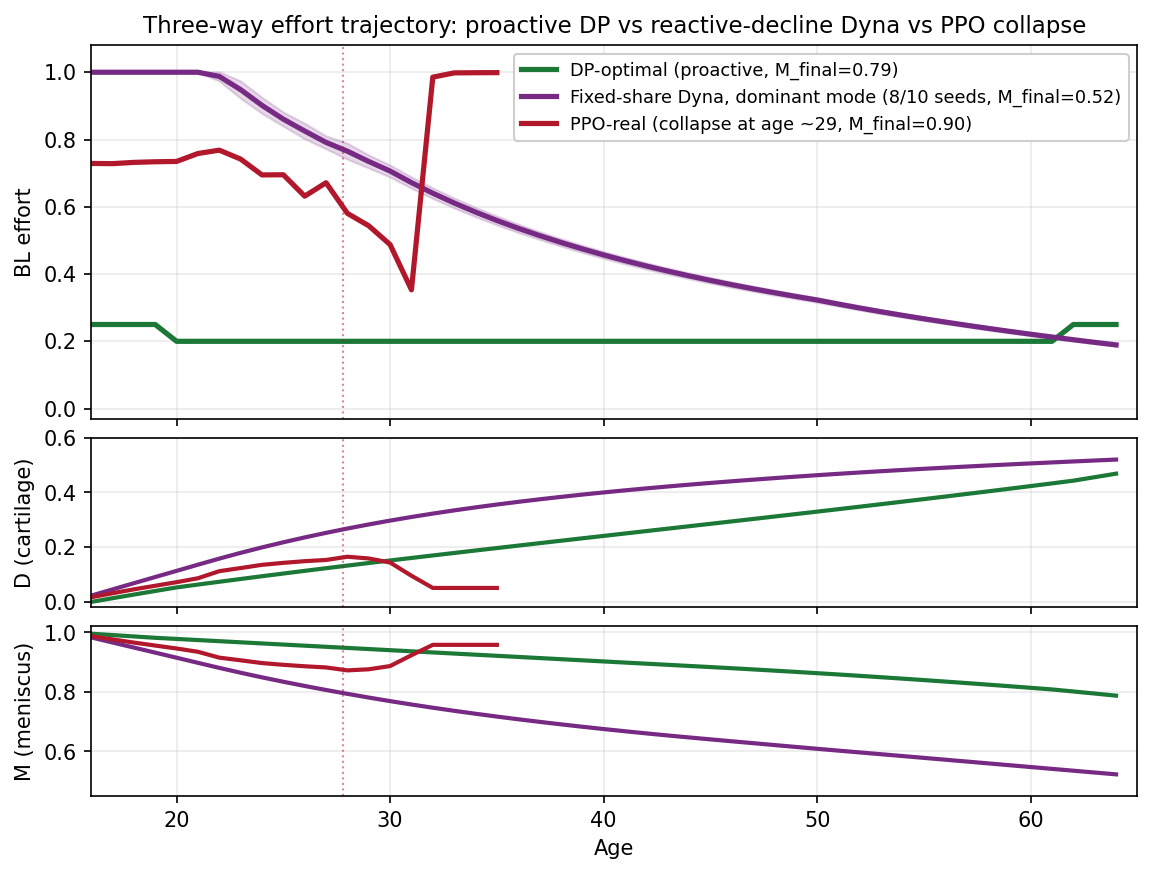}
\caption{Three-way dominant-activity effort (bricklayer, $H=49$).
DP-optimal (green, Cell~C, $M_{\text{final}}{=}0.79$);
dominant-mode Dyna (purple, Cell~B, 8/10 seeds, $0.52$);
PPO-real (red, Cell~A, exit~$\approx 27.8$\,yr).
Headline gap $\Delta M_{\text{final}} = 0.271$.}
\label{fig:three-way}
\end{figure}

\begin{table}[t]
\caption{Bricklayer instantiation of Table~\ref{tab:framework}. Cell~A
$M_{\text{final}}$ omitted (truncation confounds with suboptimality).
$\dagger$~career ends at age~65 ($H=49$ steps).}
\label{tab:decomposition}
\centering
\begin{tabular}{llcccc}
\toprule
\textbf{Condition} & \textbf{Cell} & \textbf{Seeds} &
\textbf{Exit age (mean $\pm$ sd)} & \textbf{Completion} & \textbf{Optimality} \\
\midrule
PPO-real                 & A & 20   & $27.8 \pm 4.0$ & fails             & ---               \\
Unrestricted Dyna        & A & 20   & $24.7 \pm 4.5$ & fails             & ---               \\
Fixed-share Dyna (dom.)  & B & 8/10 & $65.0 \pm 0.0$ & \textbf{achieves} & fails             \\
DP-optimal               & C & --   & $65^\dagger$   & \textbf{achieves} & \textbf{achieves} \\
\bottomrule
\end{tabular}
\end{table}

\section{Completion: Causal Analysis}\label{sec:completion}

\paragraph{PPO-real fails completion (P1).}
PPO with $\gamma = 1$ (undiscounted episodic return; the rbar
exponential moving-average baseline of \citealt{mahadevan1996} is
used for variance reduction)
terminates at a mean age of $27.8 \pm 4.0$ years, with 100\% of
2{,}000 eval episodes ending in role-violation (20 seeds, 10
curriculum-on and 10 curriculum-off). The role-viability rule fires
when
\[
\frac{1}{W}\sum_{\tau = t - W + 1}^{t} s^{(\tau)}_{\text{dom}} < \alpha,
\]
with $W = 5$ years and $\alpha = 0.15$ in the bricklayer.
The agent observes only the proxy
signal $S_t$. The trailing share history
$s^{(\tau)}_{\text{dom}},\, \tau \in [t{-}W{+}1, t{-}1]$ that
determines the role-viability boundary is not provided in the state
observation, making the constraint boundary non-Markovian from the
agent's perspective. This partial-observability challenge is a
contributing factor to PPO-real's failure, separate from and
additional to the long-horizon credit-assignment difficulty. Each
is acknowledged in the Limitations of \S\ref{sec:discussion}.
Including the proxy signal $S$ in the state yields $\Delta = -0.65$\,yr
relative to zeroing it (20 seeds per condition), within seed variance
and in the wrong direction: $S$ becomes informative only at
$D > 0.3$, whereas PPO-real exits at $D_{\text{final}} = 0.16$.
The proxy is not uninformative in principle. It is informative about
states the learner never reaches, a diagnostic of the fixed
afferent's design, which covers the clinical damage range rather than
the early-career range where the greedy policy operates.

\paragraph{Horizon access alone is harmful (P2).}
Across 20 seeds of the best unrestricted Dyna configuration, trained
with rollouts extended past role-violation (\texttt{no\_exit} flag)
and a linear soft role penalty with weight $w = 1.0$ (applied as
$-w \cdot \mathbf{1}[s_{\text{dom},t} < \alpha]$ per step),
the mean exit age under real evaluation is $24.7$ years
(95\% CI $[23.0, 26.8]$), \emph{earlier} than PPO-real's $27.8$
(Welch's $t(37.8) = -2.29$, $p = 0.028$; Mann-Whitney $p = 0.009$).
The exit-age distribution is bimodal: 10/20 seeds exit at
age~21 via role-violation (the earliest age the trailing-window
check can fire); the remaining 10 seeds survive to ages 26--38.
The linear soft penalty does not induce an interior basin around
any positive dominant-activity allocation, and therefore the share
head's unconstrained optimum places mass at zero once curriculum
scaffolding lifts. Whether convex or entropy-regularised
alternatives escape this pattern is open
(\S\ref{sec:discussion}).

\paragraph{Action-space restriction unlocks completion.}
Restricting the policy to the DP-optimal share allocation
(\texttt{fixed\_share}, learning only effort) achieves completion
in all cases: all 10 seeds reach the terminus
($65.0 \pm 0.0$ yr, zero role-violations over 1{,}000 eval episodes)
across $H \in \{13, 20, 30, 49\}$
(Appendix~\ref{appendix:horizon-invariance}). Completion is a
function of action-space structure, not horizon length.

\paragraph{Fixed-share PPO-real ablation.}
To isolate whether action-space restriction alone (without
\texttt{no\_exit}) suffices, we run \texttt{fixed\_share} PPO-real
with the role-viability gate active during training (10 seeds).
All 10 seeds achieve 100\% completion (exit age $65.0 \pm 0.0$).
Completion is therefore attributable to the structural share
constraint, which keeps $s_{\text{dom}} \geq \alpha$ by
construction. \texttt{no\_exit} is not independently necessary
for completion but is required for the Dyna agent to accumulate
long-horizon experience.

A $(W, \alpha)$ sensitivity sweep (90 runs,
Appendix~\ref{sec:role-sensitivity}) confirms threshold robustness.

\section{Optimality: Basin Analysis}\label{sec:optimality}

\subsection{Reactive decline is the dominant attractor (P3)}\label{sec:optimality:reactive}

Across 10 seeds of the fixed-share Dyna sweep, 8 converge to a nearly
identical effort profile: maximum effort at $t = 0$, monotone
decline to $0.22$ by age 60 (Figure~\ref{fig:basin}b).
Seed-to-seed variance in
dominant-mode $M_{\text{final}}$ is $0.001$, three orders of
magnitude tighter than the gap to DP-optimal, confirming this is a
distinct fixed point, not optimisation noise.

The mechanism is feature~(e): early maximum effort drives $M$ below
$M_{\text{amp}} = 0.6$ irreversibly, triggering a degradation
cascade the DP avoids by sustaining low effort from $t = 0$.
Two seeds escape the reactive basin: seed~4 via an inverted-U
effort profile ($M_{\text{final}} = 0.64$) and seed~7 via a
mid-training reward collapse to a DP-like low-effort mode
($M_{\text{final}} = 0.825$), demonstrating that the DP-competitive
region is stochastically reachable, though not via a reliable
convergence path. The basin-dominance claim is
accordingly about 8 of 10 seeds (95\% Clopper--Pearson CI
$[0.44, 0.97]$), not 10 of 10.

\subsection{First-phase basin entry as the mechanism (P4)}\label{sec:optimality:step-zero}

Three pieces of evidence support first-phase basin entry:

\begin{enumerate}
\item \textbf{All 8 dominant-cluster seeds reach maximum effort at
the initial state within the first 10{,}000 training steps}
($\approx$1\% of training), read from periodic training logs. The
reactive trajectory is established before any long-horizon signal has
accumulated.

\item \textbf{The two escaping seeds did not commit at step~0.}
Seed~4 has initial-state effort $0.183$ and seed~7 has $0.346$;
both escaped because their stochastic first gradient update did not
push toward the maximum. Proposition~\ref{prop:step-zero}
governs the \emph{expected} gradient direction. Individual sample
gradients can differ in sign, which is consistent with these two
escaping seeds.

\item \textbf{The basin structure is invariant to training horizon}
across $H \in \{13, 20, 30, 49\}$: dominant-mode
$M_{\text{final}}$ is constant at $0.523 \pm 0.001$ regardless of
horizon (Appendix~\ref{appendix:horizon-invariance},
Figure~\ref{fig:horizon-invariance}). A basin entered by
aggregating long-horizon signals would shift with $H$. This one does
not.
\end{enumerate}

Proposition~\ref{prop:step-zero} (Appendix~\ref{sec:synthetic-mdp})
formalises the \emph{direction of the expected gradient} in a
binary-action minimal MDP. The empirical evidence above establishes
saturation within the first 1\% of training for all 8 reactive-cluster
seeds.

\subsection{Initialization intervention: a falsifiability test}\label{sec:optimality:init-intervention}

The first-phase account predicts that biasing the effort head's
initial output toward the DP-optimal value weakens the basin's
dominance. Setting the effort head's final-layer bias to $-1.386$
($\sigma(-1.386) \approx 0.20$, matching DP-optimal sustained effort),
with all other settings identical to the fixed-share sweep, produces
\textbf{4/10 seeds escaping} (vs.\ 2/10 baseline) and raises mean
$M_{\text{final}}$ from $0.564$ to $0.609$.

Pre-registered binding thresholds (\texttt{prompts/e1\_prereg.md},
frozen 2026-04-20): $\geq 6$ confirms, $4$--$5$ is \emph{nuanced},
$\leq 3$ falsifies. The result falls in the nuanced band. Six seeds
revert to the reactive basin within the first $10^4$ gradient
updates. The four escapers split into three attractor types:
seeds~4 and~7 reach DP-competitive
$M_{\text{final}}{\in}[0.79, 0.85]$; seed~9 converges to near-zero
late-career effort ($e_{t>40} \approx 0$); seed~0 oscillates.
Per-seed results are in
Appendix~\ref{sec:hyperparams}, Table~\ref{tab:e1-seed-table}).
Seeds~4 and~7 represent qualitatively distinct effort profiles:
seed~4 converges to a gradually increasing schedule near the
DP-optimal level, while seed~7 settles below the DP grid minimum
($e < 0.20$) and achieves $M_{\text{final}} = 0.846$, exceeding the
DP reference. Both escape the reactive basin not by resolving the
long-horizon credit-assignment problem but by avoiding step-0
commitment, a pattern consistent with the occupational health finding
that conservative early-career effort profiles produce better
long-run outcomes than maximum early effort \citep{mclellan2022}.
The attraction is strong but not absolute: closing the gap requires
weakening the basin's attraction, not correcting the initialisation.

\begin{figure}[t]
\centering
\includegraphics[height=5cm,keepaspectratio]{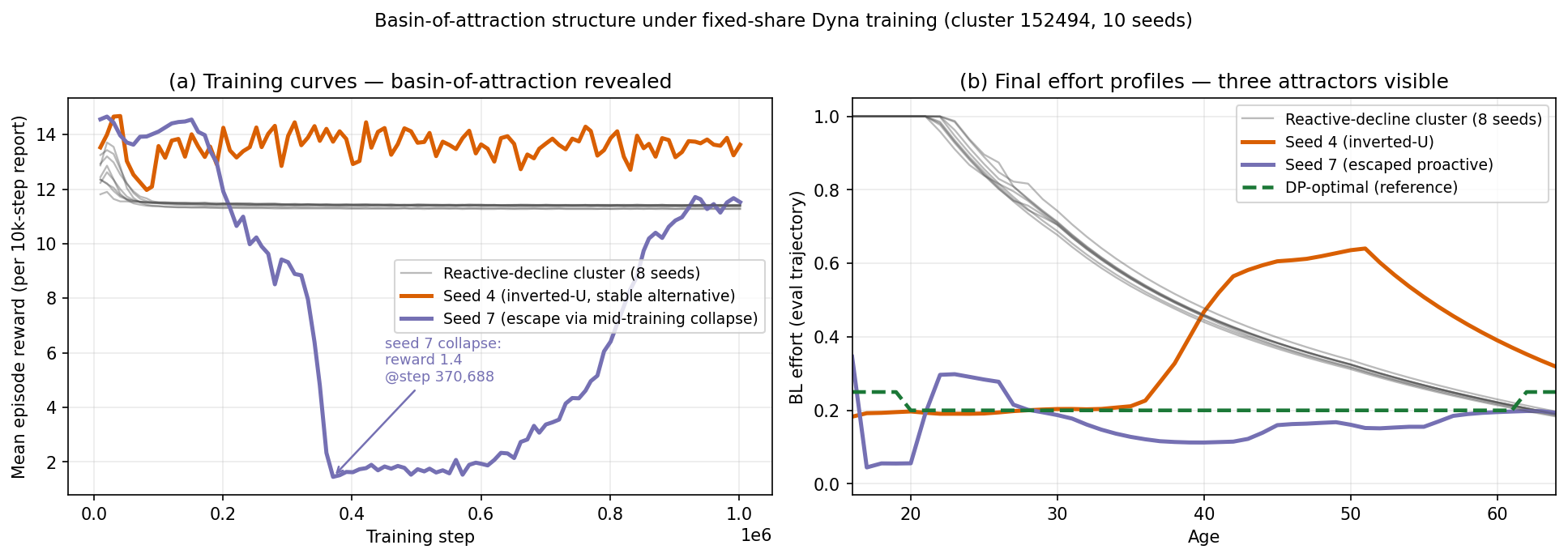}
\caption{Basin-of-attraction structure (10 seeds, bricklayer).
(a)~8 seeds converge to the reactive-decline attractor ($\sim$100k
steps); seed~4 reaches a stable alternative; seed~7 collapses
mid-training and partially recovers, demonstrating that the
DP-competitive region is stochastically reachable.
(b)~Final effort profiles; all 8 reactive seeds output maximum
effort after only 10k steps.}
\label{fig:basin}
\end{figure}

\section{Replication: NBA Career Environment}\label{sec:replication}

To distinguish the bricklayer findings from parameter-specific
artefacts, we replicate the four-condition experiment in the NBA
power-forward testbed (\S\ref{sec:testbeds}).

\paragraph{Completion axis (P1--P2) replicates.}
PPO-real fails across all 20 seeds (exit $22.6 \pm 1.8$,
role-violation $100\%$). Unrestricted Dyna across 60 seeds
($w \in \{0.5, 1.0, 2.0\}$) exits at $21.2$ yr, $1.4$ yr
\emph{earlier} than PPO-real ($p = 0.003$ Welch's $t$;
$p = 0.0003$ Mann-Whitney). The same direction of harm confirms the
mechanism is not domain-specific. Fixed-share Dyna achieves $100\%$
completion (10/10 reach age~38).

\paragraph{Optimality axis (P3--P4) replicates with compression.}
$\Delta M_{\text{final}} = 0.150$ (95\% CI $[0.148, 0.151]$), above
the pre-registered $0.10$ threshold. The NBA testbed falls outside the
analytical guarantee ($H^* \in [6, 14]$,
Appendix~\ref{sec:synthetic-mdp}), providing a direct test of whether
the commitment condition is sufficient but not necessary: the reactive
basin is the plurality attractor ($4/10$) rather than the
bricklayer's majority ($8/10$), consistent with the $H^*$ compression
prediction. Three seeds settle in a mid-cluster
($M_{\text{final}}{=}0.71$) and three converge near DP ($0.85$).
Fisher's exact $p = 0.17$ at $n=10$ is non-significant
per-environment, but the decomposition generalises across domains.

\paragraph{Horizon invariance (P4) replicates at 3 of 4 horizons.}
Dominant-mode $M_{\text{final}}$ is $0.614, 0.614, 0.648, 0.615$
for $H = 7, 10, 15, 20$. Three horizons agree to within $0.001$;
the $H = 15$ outlier is consistent with $H^*$ in $[6, 14]$
(Appendix~\ref{sec:synthetic-mdp}): $H = 15$ lies outside the
analytical guarantee under both damage channels. The basin-dominance
rate decreases as $H$ exceeds $H^*$. The step-0 mechanism remains
universal.

\begin{table}[t]
\caption{Bricklayer ($H=49$) vs.\ NBA ($H=20$) replication summary.}
\label{tab:nba-replication}
\centering\small
\begin{tabular}{lcc}
\toprule
\textbf{Finding} & \textbf{Bricklayer} & \textbf{NBA} \\
\midrule
PPO-real completion rate                  & $0\%$               & $0\%$ \\
PPO-real mean exit age                    & $27.8\pm4.0$        & $22.6\pm1.8$ \\
Unrestr.\ Dyna $\Delta$ exit vs PPO-real  & $-3.1$ yr & $-1.4$ yr \\
Welch's $t$ ($p$) / Mann-Whitney ($p$)    & $0.028$ / $0.009$ & $0.003$ / $0.0003$ \\
Fixed-share Dyna completion rate          & $100\%$ & $100\%$ \\
$M^{\text{DP}}_{\text{final}}$            & $0.794$ & $0.764$ \\
$M^{\text{Dyna,dom}}_{\text{final}}$              & $0.523 \pm 0.001$   & $0.614 \pm 0.002$ \\
$\Delta M_{\text{final}}$ (95\% CI)               & $0.271$             & $0.150\ [0.148,0.151]$ \\
Reactive-basin dominance (95\% CP CI)     & $8/10\ [0.44,0.97]$ & $4/10\ [0.12,0.74]$ \\
Fisher exact ($p$, dominance difference)  & \multicolumn{2}{c}{$p = 0.17$; not significant at $n=10$} \\
\bottomrule
\end{tabular}
\end{table}

\section{Discussion}\label{sec:discussion}

\paragraph{The decomposition as diagnostic.}
The decomposition separates two failure modes that return-based
evaluation conflates, and shows that interventions targeting one
axis can worsen the other: a paper reporting only
exit age would claim fixed-share Dyna is solved, while one reporting
only $M_{\text{final}}$ would be misled by PPO-real's artificially
high (early-exit) score. The decomposition also maps existing
interventions to the axis each addresses. Action-space restriction
(\texttt{fixed\_share}) targets the completion axis by enforcing a
constraint-satisfying share allocation. Improving the afferent's
informativeness in the policy's operating damage range targets the
completion axis from the information side. The optimality gap ($\Delta
M_{\text{final}} = 0.271$) remains open to both interventions and
requires a separate credit-assignment solution. The reactive-decline policy is qualitatively
consistent with occupational epidemiology
\citep{palmer2012,coggon2000,jensen2008,rytter2009}: workers
maintain high effort until tissue damage forces reactive regulation
\citep{woolfsalter2000,arendtnielsen2010}, though real workers face
financial and social constraints we do not model. We conjecture the
failure modes generalise to other cumulative-damage domains
satisfying Assumption~\ref{asm:cds} with features (a)--(e).

\paragraph{Connection to early sport specialisation.}
The reactive-decline attractor has a direct empirical counterpart in
the early sport specialisation literature \citep{difiori2014,mclellan2022,jildeh2024}:
maximum dominant-activity effort from career outset, accelerating
secondary degradation once a tissue threshold is crossed, and
premature exit. The structural parallels are precise: latent damage
accumulation before pain signals become informative (feature~(a)),
self-amplifying degradation below $M_{\text{amp}}$ (feature~(e)),
and implicit role-viability violation as the terminal condition.
Empirically, NBA players who specialised in a single sport during
adolescence play in fewer career games and suffer more serious
injuries than multi-sport athletes \citep{mclellan2022}, consistent
with the model's prediction that distributing effort across activities
preserves $M_{\text{final}}$. The step-0 commitment mechanism
provides a formal account of why agents in this class converge to
this pattern even when proactive load management would be optimal.

\paragraph{Limitations.}
\textit{Same engine}: replication tests parameter-regime generality within one codebase, not across independently developed simulators.
\textit{Proposition--testbed gap}: Proposition~\ref{prop:step-zero} covers a binary-action minimal MDP; it motivates but does not formally derive the empirical findings in the multi-activity continuous-effort testbeds.
\textit{PPO and penalty specificity}: whether convex penalties or maximum-entropy methods (SAC, \citealp{haarnoja2018}) escape harmful horizon access is open.
\textit{Ground-truth model}: Dyna uses the exact simulator transition; practical model-based RL incurs additional misspecification error.
\textit{Partial observability}: the role-viability boundary depends on the trailing $W$-step share history, which is absent from the state observation, making the constraint non-Markovian from the agent's perspective.
\textit{Shared trunk}: the share and effort heads share a common network trunk; a two-trunk architecture would provide stricter causal isolation of the two failure modes.

\paragraph{Broader impact and future directions.}\label{sec:future-directions}
The framework can inform ergonomic interventions extending productive
working life; risks of discriminatory reuse are mitigated by
population-level synthetic agents and no individual-prediction API.
Priority experiments: convex share penalty
($-\lambda \log(s_{\text{dom}} - \alpha)$), non-DP-optimal share
control, SAC comparison, a sweep over initial effort-head bias
values to test whether the escape rate increases monotonically,
replacing the fixed proxy signal with a learned or evolved
internal-signal architecture, to test whether adaptive afferents
restore completion without action-space restriction, and a two-trunk
architecture to provide a stricter causal isolation of the share and
effort failure modes.

\bibliographystyle{plainnat}
\bibliography{refs}

\appendix
\section{Horizon Invariance of the Basin Structure}\label{appendix:horizon-invariance}

This appendix reports the horizon-invariance experiment supporting
prediction P4 (\S\ref{sec:predictions}) and the first-phase
basin-entry claim of \S\ref{sec:optimality:step-zero}. The bricklayer
fixed-share Dyna sweep was run at four training horizons,
$H \in \{13, 20, 30, 49\}$, with 10 seeds per horizon and all other
hyperparameters held fixed (Appendix~\ref{sec:hyperparams}).

\paragraph{Evaluation protocol.}
All policies, regardless of training horizon, are evaluated on the
full $H = 49$ environment. A policy trained at $H = 13$ is therefore
evaluated zero-shot on the career up to age~65; its
$M_{\text{final}}$ is measured at step~48 (age~65). This ensures
all four training horizons produce comparable $M_{\text{final}}$
values. The horizon-invariance of $M_{\text{final}} = 0.523 \pm
0.001$ across all four conditions therefore cannot be attributed to
differences in evaluation length.

\paragraph{Findings.} (i) The reactive-decline basin holds 6--8 of 10
seeds at every $H$ (Figure~\ref{fig:horizon-invariance}a); the
dominance rate does not increase with longer training.
(ii) Dominant-mode effort schedules overlap to within seed noise
across the four horizons (panel b): the policy learned from 13
training years is indistinguishable from the policy learned from 49.
(iii) Dominant-mode $M_{\text{final}}$ is invariant at
$0.523 \pm 0.001$ (panel c); the gap to DP-optimal ($0.271$) does
not shrink with $H$, confirming first-phase basin entry
(\S\ref{sec:optimality:step-zero}).
NBA replication of P4 at $H \in \{7, 10, 15, 20\}$ is reported in
\S\ref{sec:replication}.

\begin{figure}[h]
\centering
\includegraphics[width=\linewidth]{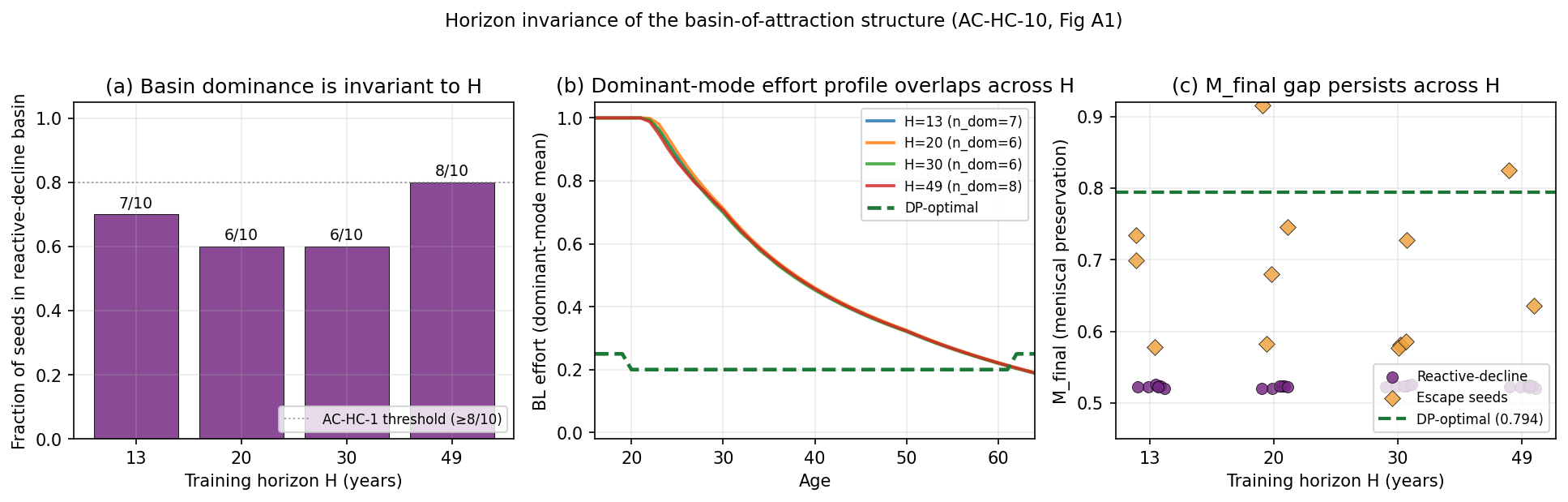}
\caption{Horizon invariance of the basin-of-attraction structure
(bricklayer testbed). All conditions evaluated on the full $H=49$
environment.
(a) Reactive-decline basin holds 6--8 of 10 seeds across training
horizons $H \in \{13, 20, 30, 49\}$.
(b) Dominant-mode effort profiles overlap nearly perfectly across
$H$: the policy learned from 13 years of training rollouts is
indistinguishable in effort schedule from the policy learned from 49
years.
(c) Dominant-mode $M_{\text{final}}$ is invariant across $H$ at
$0.523 \pm 0.001$; the gap to DP-optimal ($0.271$) does not shrink
with longer training. This is direct evidence for first-phase
basin entry: the reactive attractor is determined by the first
policy-gradient update, not by the long-horizon signal that longer
rollouts provide.
Data: cluster~152494 ($H=49$) and cluster~152523 ($H \in \{13,20,30\}$).}
\label{fig:horizon-invariance}
\end{figure}

\section{Step-0 Basin Entry in a Minimal Cumulative-Damage MDP}
\label{sec:synthetic-mdp}

This appendix derives Proposition~\ref{prop:step-zero} on a minimal
MDP that retains only the features essential to the step-0
mechanism. The minimal MDP omits the multi-activity structure,
secondary-channel amplification, and age-dependent capacity of the
testbeds; it therefore speaks to the \emph{direction of the expected
first update} and the $H^*$ boundary, not to the magnitude of
$\Delta M_{\text{final}}$ or the full attractor structure.

\paragraph{Setup.}
Finite-horizon MDP with scalar damage $D_t \in [0,1]$, $D_0 = 0$;
binary effort $e_t \in \{e_L, e_H\}$, $0 < e_L < e_H \leq 1$;
transition $D_{t+1} = \min(1, D_t + \kappa e_t)$; reward
$r(D_t, e_t) = e_t^{\beta}(1-D_t)^2$ for $\beta \in (0,1]$;
return $G(\tau) = \sum_{t=0}^{H-1} r(D_t, e_t)$.
This satisfies Assumption~\ref{asm:cds} and is the skeleton on
which the step-0 mechanism operates.

\begin{lemma}[Lipschitz bound on $V^\pi$]
\label{lem:lipschitz}
Let $\bar{e}_\beta = \tfrac{1}{2}(e_L^\beta + e_H^\beta)$ be the
expected per-step effort reward under the uniform policy at $D>0$.
Under the deterministic transition $D_{t+1} = D_t + \kappa e_t$
(prior to clipping at $1$), the continuation value satisfies
\[
  \bigl|V^\pi(\kappa e_H) - V^\pi(\kappa e_L)\bigr|
  \;\leq\; 2(H-1)\,\bar{e}_\beta\,\kappa(e_H - e_L).
\]
\emph{Proof.} The reward $r(D,e) = e^\beta(1-D)^2$ satisfies
$|\partial r/\partial D| = 2e^\beta(1-D) \leq 2e^\beta$ on $[0,1]$.
Under the uniform policy for $t \geq 1$, damage evolves
deterministically, so $\partial D_t/\partial D_1 = 1$ for all
$t \geq 1$. The continuation value $V^\pi(\kappa e)$ covers
steps $t = 1, \ldots, H-1$ (i.e., $H-1$ steps). By the chain rule:
\[
  \left|\frac{dV^\pi}{dD_1}\right|
  \leq \sum_{t=1}^{H-1} \mathbb{E}\bigl[2e_t^\beta\bigr]
  \leq 2(H-1)\,\bar{e}_\beta.
\]
Applying this to the gap $V^\pi(\kappa e_H) - V^\pi(\kappa e_L)$
via the mean-value theorem in $D_1$ gives the stated bound. $\square$
\end{lemma}

\paragraph{Policy and first-update gradient.}
Parameterise the $D=0$ action distribution by $\theta \in \mathbb{R}$:
$\pi_\theta(e_H \mid D=0) = \sigma(\theta)$, initialised at
$\theta_0 = 0$ (uniform). Action selection at $D > 0$ is held at
uniform $\Pr(e_H) = 1/2$; we analyse only the first update on
$\theta$. The REINFORCE gradient is
$\nabla_\theta J(\theta) = \mathbb{E}_\pi\bigl[\nabla_\theta \log \pi_\theta(a_0 \mid D_0)\, G(\tau)\bigr]$.
At $\theta_0 = 0$, $\sigma(\theta_0) = 1/2$ and
$\nabla_\theta \log \pi_{\theta_0}(e_H \mid 0) = +1/2$,
$\nabla_\theta \log \pi_{\theta_0}(e_L \mid 0) = -1/2$. Splitting
the expectation by the first action and identifying the inner
expectations as $Q^\pi(0, e_H)$ and $Q^\pi(0, e_L)$ yields
\begin{equation}
\nabla_\theta J(\theta)\big|_{\theta_0}
= \tfrac{1}{4}\bigl[Q^\pi(0, e_H) - Q^\pi(0, e_L)\bigr].
\label{eq:step-zero-gradient}
\end{equation}
The sign of the \emph{expected} first update equals the sign of the
state-action-value gap at $D=0$. Using the Bellman decomposition
$Q^\pi(0,e) = r(0,e) + V^\pi(\kappa e)$ and Lemma~\ref{lem:lipschitz}:
\begin{equation}
Q^\pi(0,e_H) - Q^\pi(0,e_L)
\geq (e_H^\beta - e_L^\beta)
     - 2(H-1)\,\kappa\,\bar{e}_\beta\,(e_H - e_L).
\label{eq:commitment-condition-exact}
\end{equation}
Equivalently, writing $e_H^\beta - e_L^\beta = \Delta_\beta$ for
brevity:
\begin{equation}
Q^\pi(0,e_H) - Q^\pi(0,e_L)
\geq \Delta_\beta - 2(H-1)\,\kappa\,\bar{e}_\beta\,(e_H - e_L).
\label{eq:commitment-condition}
\end{equation}

\begin{proposition}[Step-0 commitment]
\label{prop:step-zero}
If
\[
  e_H^\beta - e_L^\beta \;>\; 2(H-1)\,\kappa\,\bar{e}_\beta\,(e_H - e_L),
\]
then the \emph{expected} first policy-gradient update is positive,
$\mathbb{E}[\nabla_\theta J(\theta)|_{\theta_0}] > 0$: the expected
update shifts $\theta$ toward $e_H$ at $D = 0$. Because the
condition depends only on the MDP parameters $(\beta, \kappa, H,
e_L, e_H)$ and not on random seed, the \emph{expected} gradient
favours $e_H$ for every seed. Individual stochastic gradient
estimates may differ in sign (consistent with the two escaping seeds
in \S\ref{sec:optimality:step-zero}).
\end{proposition}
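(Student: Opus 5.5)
The plan is to chain the gradient identity \eqref{eq:step-zero-gradient} with the lower bound \eqref{eq:commitment-condition}. Both are stated just before the proposition, so the argument should be short. The only real work is making sure the pieces fit together under the stated model: clipping at $1$, the uniform policy at $D>0$, and $\gamma=1$.

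\emph{Step 1: reduce the expected update to the value gap.} I will first confirm \eqref{eq:step-zero-gradient} exactly. The parameter $\theta$ enters only through $\pi_\theta(\cdot\mid D=0)$, and in a single trajectory $D=0$ is visited only at $t=0$. This holds because $\kappa e_t \ge \kappa e_L>0$, so damage strictly increases after the first step; I am taking $\kappa>0$, without which the condition is trivially about immediate reward. The score function is therefore just $\nabla_\theta\log\pi_\theta(a_0\mid 0)$. Conditioning on $a_0$ and using $\sigma(0)=1/2$ with score values $\pm 1/2$ gives
\[
\mathbb{E}[\nabla_\theta J|_{\theta_0}] = \tfrac12\cdot\tfrac12\,Q^\pi(0,e_H)-\tfrac12\cdot\tfrac12\,Q^\pi(0,e_L).
\]
In the proposition, "expected first update" means the expectation of the REINFORCE estimator over trajectories, and that expectation is exactly this quantity. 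So positivity of the expected update is equivalent to $Q^\pi(0,e_H)>Q^\pi(0,e_L)$.

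\emph{Step 2: bound the continuation gap.} Next I write $Q^\pi(0,e)=r(0,e)+V^\pi(\kappa e)$. Since $r(0,e)=e^\beta$, the immediate-reward difference is exactly $\Delta_\beta=e_H^\beta-e_L^\beta$. Lemma~\ref{lem:lipschitz} bounds the continuation gap, $V^\pi(\kappa e_L)-V^\pi(\kappa e_H)\le 2(H-1)\bar e_\beta\kappa(e_H-e_L)$, and combining the two yields \eqref{eq:commitment-condition}. Under the hypothesis of the proposition the right-hand side of \eqref{eq:commitment-condition} is strictly positive. Hence the $Q$-gap is positive, and by Step 1 so is the expected gradient.

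\emph{Step 3: seed independence.} The bound involves only $(\beta,\kappa,H,e_L,e_H)$ and the fixed initialisation $\theta_0=0$. The expectation is therefore a deterministic number, identical for every seed. Individual single-sample REINFORCE estimates $\pm\tfrac12 G(\tau)$ can still point either way, and I will note this as the remark about escaping seeds.

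\emph{Main obstacle: applying Lemma~\ref{lem:lipschitz} under clipping.} The lemma is proved before clipping at $1$, whereas the MDP clips $D$. I will handle this with a monotonicity argument. The map $D\mapsto (1-\min(1,D))^2$ is still $2$-Lipschitz on $[0,\infty)$ with derivative bounded by $2$ in magnitude, and clipping is $1$-Lipschitz. Consequently, coupling the two trajectories that start from $D_1=\kappa e_L$ and $D_1=\kappa e_H$ under the same action sequence keeps their damage difference at most $\kappa(e_H-e_L)$ at every later step. Summing $2\,\mathbb{E}[e_t^\beta]\,\kappa(e_H-e_L)$ over $t=1,\dots,H-1$ then recovers the lemma's bound directly. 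This avoids differentiability issues and makes Step 2 rigorous as stated.
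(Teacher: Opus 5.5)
Your proposal is correct and follows the paper's own argument: the score-function identity $\nabla_\theta J|_{\theta_0}=\tfrac14\bigl[Q^\pi(0,e_H)-Q^\pi(0,e_L)\bigr]$, the Bellman split $Q^\pi(0,e)=e^\beta+V^\pi(\kappa e)$, and Lemma~\ref{lem:lipschitz} to bound the continuation gap. Your coupling treatment of clipping is a slightly more careful substitute for the paper's chain-rule/mean-value step, but it is not actually needed here: the hypothesis itself implies $\kappa e_H (H-1)<1$ (write $x=e_L/e_H$ and use $x^{1-\beta}+x<2$), so damage never reaches the clipping threshold on the reward-collecting steps.
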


\paragraph{Critical horizon.}
Define
\[
  H^* = 1 + \frac{e_H^\beta - e_L^\beta}{2\,\kappa\,\bar{e}_\beta\,(e_H - e_L)}.
\]
Condition~\eqref{eq:commitment-condition} holds for all $H \leq H^*$.

\paragraph{Scope and self-reinforcement.}
The proposition establishes the direction of the expected first
update only. It does not prove convergence to the $e_H = 1$
attractor. Self-reinforcement bridges the direction of the first update to
empirical saturation: (i)~after the first expected update
$\mathbb{E}[\theta_1] > 0$, rollouts from $D_0 = 0$ increasingly
sample $e_H$, concentrating subsequent gradient signal on
$e_H$-prefixed trajectories; (ii)~early-rollout damage is small,
so reward suppression is dominated by the $e_H^\beta - e_L^\beta$
reward gap for the first $\sim 1/\kappa$ steps; (iii)~subsequent
updates receive a stronger signal of the same sign, driving $\theta$
monotonically until $\pi(e_H \mid D_0 = 0) \to 1$. This is
consistent with empirical saturation within the first 1\% of
training for all 8 reactive-cluster seeds
(\S\ref{sec:optimality:step-zero}).
The proposition does not cover the multi-activity,
continuous-effort testbeds. It is a motivating formal result, not a
direct theorem about those environments.

\paragraph{Bricklayer $H^*$ calculation.}
In the bricklayer, under greedy full effort ($e_H = 1$) on the
dominant activity, the per-step damage rate is
$\kappa \approx \text{damage\_scale} \times \text{load} \approx
0.083 \times 0.9 \approx 0.075$. Using $\beta = 0.6$,
$e_L \approx 0.05$, $e_H = 1.0$:
\[
  e_H^\beta - e_L^\beta = 1.0 - 0.05^{0.6} \approx 1.0 - 0.178 = 0.822,
  \quad
  \bar{e}_\beta = \tfrac{1}{2}(0.178 + 1.0) = 0.589,
  \quad
  e_H - e_L = 0.95.
\]
\[
  H^* = 1 + \frac{0.822}{2 \times 0.075 \times 0.589 \times 0.95}
      = 1 + \frac{0.822}{0.0840}
      \approx 10.8.
\]
The analytical guarantee therefore covers $H \leq 10$. All four
bricklayer training horizons ($H \in \{13, 20, 30, 49\}$) lie above
$H^*$, yet empirical basin entry occurs at all four (8/10 seeds each,
$M_{\text{final}} = 0.523 \pm 0.001$). This is consistent with the
condition being sufficient but not necessary: the proposition predicts
commitment when $H \leq H^*$, while the empirical results show that
the basin remains dominant well beyond the analytical guarantee, as in
the NBA environment.

\emph{NBA.} The NBA environment has two distinct damage channels:
the primary cartilage channel ($\kappa = \delta = 0.055$) and the
secondary meniscal channel ($\kappa = \mu_{\text{eff}} = 0.15$).
Using $\beta = 0.6$, $e_L = 0.05$, $e_H = 1.0$ (consistent with
the assumption $0 < e_L < e_H$ and the NBA minimum-effort floor):
\[
  e_H^\beta - e_L^\beta = 0.822,\quad
  \bar{e}_\beta = 0.589,\quad
  e_H - e_L = 0.95.
\]
\[
  H^*_{\text{primary}}
    = 1 + \frac{0.822}{2 \times 0.055 \times 0.589 \times 0.95}
    \approx 1 + \frac{0.822}{0.0615}
    \approx 14.4,
\]
\[
  H^*_{\text{secondary}}
    = 1 + \frac{0.822}{2 \times 0.15 \times 0.589 \times 0.95}
    \approx 1 + \frac{0.822}{0.168}
    \approx 5.9.
\]
The overall guarantee floor is $H^* \approx 6$ (secondary channel
binding) and ceiling $H^* \approx 14$ (primary channel), giving a
range of $H^* \in [6, 14]$ depending on which channel is the binding
constraint. The horizon $H = 20$ lies above both bounds. Empirical
basin entry at 4/10 seeds is consistent with the condition being
sufficient but not necessary. The horizon-sweep
outlier at $H = 15$ lies just above $H^*_{\text{primary}} \approx
14$: it is the first integer horizon outside the formal commitment
region under the primary channel, making the $H = 15$ exception the
sharpest available confirmation of the $H^*$ boundary on the
discrete horizon grid.

\section{Biological Parameter Provenance}\label{sec:bio-params}

The bricklayer environment models a 49-year career with seven
activities parameterised by energy cost, hazard coefficient, and
performance coefficient (Table~\ref{tab:activities}). Per-step
cumulative damage evolves as
\begin{equation}
D_{t+1} = \max\!\Bigl(0,\;\min\!\bigl(1,\;
  D_t
  + \text{damage\_scale} \cdot \text{load}_t
    \cdot b(\mathrm{BMI}_t) \cdot m(M_t)^{1.3}
  - r(\text{age}_t)\bigr)\Bigr),
\label{eq:damage-update}
\end{equation}
where $\text{load}_t = 0.40 \cdot \text{stress}_t
+ 0.35 \cdot \text{strain}_t + 0.25 \cdot \text{shear}_t$
is the weighted load. For each activity $i$, the load components are
derived from the single hazard scalar $h_i$ as
$\text{stress}_i = 0.45\,h_i$, $\text{strain}_i = 0.35\,h_i$,
$\text{shear}_i = 0.20\,h_i$ (proportions calibrated to knee-joint
biomechanics \citealt{jensen2008}), so
$\text{load}_t = \bigl(0.40\times0.45 + 0.35\times0.35 +
0.25\times0.20\bigr)\sum_i s_{i,t}\,e_{i,t}\,h_i
= 0.355\,\sum_i s_{i,t}\,e_{i,t}\,h_i$.
$\text{damage\_scale} = 0.083$ calibrated to
a no-regulation end-of-career $D \approx 0.75$
\citep{coggon2000,jensen2008}, $b(\mathrm{BMI}) = 1 + 0.07\,
\max(0, \mathrm{BMI} - 22)$ is a BMI multiplier, and
$r(\text{age}) = 0.015 \cdot \max(0, 1 - (\text{age}-30)/50)$
is a mild age-dependent recovery term (clamped to maintain
$D \geq 0$).

Meniscal degradation evolves as
\[
M_{t+1} = M_t
  - \text{base\_rate} \cdot \text{shear}_t \cdot a(M_t)
    \cdot f(\text{age}_t),
\]
with $\text{base\_rate} = 0.075$ derived from a target end-of-career
$M \approx 0.65$ under 70\% block\_laying share at 70\% effort
\citep{rytter2009}. The self-amplification factor is
\[
  a(M) = \begin{cases}
    1 + 3(0.6 - M) & \text{if } M < 0.6, \\
    1              & \text{otherwise,}
  \end{cases}
\]
and the age-onset factor is
\[
  f(\text{age}) = \begin{cases}
    1 + 0.5\,(\text{age} - 45)/20 & \text{if age} \geq 45, \\
    1                              & \text{otherwise.}
  \end{cases}
\]
Baratz amplification $m(M) = 1/(0.45 + 0.55 M)$ matches
the 55\% contact-area reduction in \citet{baratz1986}.

The proxy signal is
$S_t = 1 + \alpha_s D_t^{\gamma_s}$ with $\alpha_s = 2.5$,
$\gamma_s = 1.8$ (sensitisation parameters fit to the
pressure-pain threshold data of \citealt{arendtnielsen2010}).
The reward function is
\[
  r_t = h_{\text{occ}}(D_t, S_t)\;\sum_i s_{i,t}\,e_{i,t}^{\,\beta}\,
        \mathrm{perf}_i / \mathrm{perf}_{\max},
\]
where $h_{\text{occ}}(D, S) = \max(0, 1 - \mathbf{1}_{D > D_{\text{clin}}}
\cdot \nu (D - D_{\text{clin}}) \cdot S)$ is the HMS capacity-damage
feedback with $D_{\text{clin}} = 0.30$ and $\nu = 1.5$,
and $\beta = 0.6$.

\begin{table}[h]
\centering
\small
\caption{Bricklayer activity parameters (energy/hazard scale 0--100;
performance scale 0--110, where values above 100 encode above-average
task output).}
\label{tab:activities}
\begin{tabular}{lrrrl}
\toprule
Activity & Energy & Hazard & Perf & Anchor \\
\midrule
block\_laying       & 85 & 90 & 105 & \citet{coggon2000,jensen2008} \\
scaffold\_work      & 80 & 55 &  85 & \citet{vandermolen2005} \\
mortar\_mixing      & 60 & 50 &  60 & \citet{ainsworth2011} \\
cutting\_grinding   & 65 & 35 &  65 & \citet{ainsworth2011} \\
pointing\_finishing & 35 & 25 &  62 & \citet{ainsworth2011} \\
light\_repair       & 20 & 10 &  32 & \citet{ainsworth2011} \\
coordination        & 15 &  5 &  45 & \citet{ainsworth2011} \\
\bottomrule
\end{tabular}
\end{table}

\section{Hyperparameters and Implementation}\label{sec:hyperparams}

\paragraph{PPO.} Discount $\gamma = 1.0$ (undiscounted episodic
return); rbar exponential moving-average baseline
$\bar{r} \leftarrow (1-\eta)\bar{r} + \eta r_t$, $\eta = 0.01$
\citep{mahadevan1996}; training steps $10^6$ (primary),
$H \cdot 2 \times 10^4$ (horizon sweep); clip $\epsilon = 0.2$;
entropy coefficient $c_{\text{ent}} = 0.05$;
value coefficient $c_v = 0.5$; learning rate $3 \times 10^{-4}$
(Adam); minibatch 64; epochs 4; rollout 2048.

\paragraph{Dyna variants \citep{sutton1990dyna}.}
The Dyna agent interleaves real-environment rollouts with
planning steps using the ground-truth simulator as the world model.
The \texttt{fixed-share} condition freezes activity shares at the
age-stratified DP-optimal allocation and learns only per-activity
effort. The \texttt{unrestricted} condition learns both shares and
efforts. Both use the \texttt{no\_exit} flag (role\_exit,
capacity\_exit, and age\_limit checks disabled during training);
enforcement is always active at evaluation.

\paragraph{Policy network.} Shared trunk: two FC layers (128 units,
ReLU); Dirichlet share head; Normal effort head (concatenates trunk
output and sampled shares; clamped to $[0,1]$).

\paragraph{Dynamic programming.} Backward induction on
$(D, M, \text{age})$ grid, resolution $(0.01, 0.01, 1)$, effort
grid $\{0.20, 0.25, \ldots, 0.80\}$ at the DP-optimal share
allocation. The age-stratified DP-optimal shares used in the
\texttt{fixed\_share} condition are read directly from the
argmax of this backward-induction pass. Compute time: under 2 hr on a single CPU.

\paragraph{Initialization intervention.} Effort head final-layer
bias set to $-1.386$ so $\sigma(-1.386) \approx 0.20$; all other
hyperparameters and seeds $\{0,\ldots,9\}$ identical to the
fixed-share sweep. Pre-registered binding thresholds frozen
2026-04-20 at \texttt{prompts/e1\_prereg.md}.

\paragraph{Evaluation.} 100 independent rollouts per seed; 95\% CIs
from empirical distribution across seeds.

\begin{table}[h]
\caption{Initialization-bias intervention (10 seeds). Reactive-basin
membership requires initial-state effort $> 0.80$, mid-horizon
effort $> 0.40$, late-horizon effort $< 0.30$, and monotone decline
(tolerance $0.02$). Pre-registered classifier frozen 2026-04-20.
The four escaping seeds ($s \in \{0,4,7,9\}$) split into three
distinct attractors; only seeds~4 and~7 reach the DP-competitive
region.}
\label{tab:e1-seed-table}
\centering\small
\begin{tabular}{cccccr}
\toprule
Seed & Effort@16 & Effort@30 & Effort@60 & Monotone & $M_{\text{final}}$ \\
\midrule
0 & 1.000 & 0.685 & 0.223 & no  & 0.538 \\
1 & 1.000 & 0.697 & 0.220 & yes & 0.524 \\
2 & 1.000 & 0.725 & 0.223 & yes & 0.523 \\
3 & 0.997 & 0.708 & 0.217 & yes & 0.525 \\
4 & 0.209 & 0.213 & 0.235 & no  & \textbf{0.793} \\
5 & 1.000 & 0.703 & 0.221 & yes & 0.524 \\
6 & 1.000 & 0.753 & 0.229 & yes & 0.521 \\
7 & 0.143 & 0.174 & 0.145 & no  & \textbf{0.846} \\
8 & 1.000 & 0.681 & 0.223 & yes & 0.523 \\
9 & 1.000 & 0.841 & 0.000 & no  & \textbf{0.777} \\
\bottomrule
\end{tabular}
\end{table}

\section{Full-action-space DP: CMA-ES partial relaxation}\label{sec:fullaction-dp}

We use CMA-ES (population 50) to optimise a 98-dimensional vector
parameterising per-year dominant-activity share $s \in [0.15, 0.80]$
and effort $e \in [0.05, 1.00]$ for each of the 49 career years
(non-dominant shares uniform; non-dominant efforts fixed at $0.40$).
Over 300 generations (15{,}000 evaluations), CMA-ES converges to a
solution matching fixed-share DP to three significant figures in
reward, $M_{\text{final}}$, and $D_{\text{final}}$: within the
partial relaxation tested, the fixed-share restriction is not
binding. A true full-action DP (14-dimensional per year) remains
to be computed; if it yields $M_{\text{final}} > 0.794$, the
$0.271$ gap widens; it is already a lower bound on the shortfall.

\section{Role-violation sensitivity}\label{sec:role-sensitivity}

A joint sweep over $W \in \{3,5,7\}$ and $\alpha \in \{0.10,0.15,0.20\}$
(5 seeds per cell, 90 runs) finds the completion pattern fully
robust: PPO-real achieves completion rate \textbf{exactly 0} across
all 45 runs; fixed-share Dyna achieves completion rate \textbf{exactly 1}
across all 45 runs at every $(W, \alpha)$ cell. The completion
failure of PPO-real is not an artefact of the specific threshold.

\section{NBA Replication: Parameter Provenance}\label{sec:nba-params}

Six activities with post\_play as dominant activity (unique reward
maximiser: $\mathrm{perf} = 110$; unique hazard maximiser:
$\mathrm{hazard} = 90$). Normalised-hazard load model:
$\mathrm{load} = \sum_i s_i e_i h_i / h_{\max}$, $h_{\max} = 90$.
Damage:
$D_{t+1} = \max\bigl(0, \min\bigl(1,\;D_t + \delta\,\mathrm{load}_t\,m(M_t)^{1.3}
- \rho(\mathrm{age}_t)\bigr)\bigr)$,
$\delta = 0.055$. Meniscal: $\mu_{\text{eff}} = 0.15$,
amplification and age-onset factors use the same functional forms as
Appendix~\ref{sec:bio-params} with parameters re-calibrated to the
NBA hazard array. Role-viability: trailing 3-season
post\_play share $< \alpha = 0.12$ triggers release. DP grid
resolution $\Delta D = 0.02$, $\Delta M = 0.053$; solve time
$\sim 12$\,s. Capacity feedback, HMS pathway, and Baratz
amplification use the same functional forms as the bricklayer
(scaled by the NBA hazard array). Pre-registration:
\texttt{docs/prereg\_nba.md}.

\begin{table}[h]
\small\centering
\caption{NBA power-forward activities.}
\label{tab:nba-activities}
\begin{tabular}{lrrrl}
\toprule
Activity & Energy & Hazard & Perf & Description \\
\midrule
post\_play             & 90 & 90 & 110 & rebounding, post-ups, rim protection \\
perimeter\_play        & 70 & 45 &  75 & spot-up shooting, transition \\
full\_practice         & 75 & 55 &  60 & full-contact practice \\
skill\_training        & 40 & 15 &  45 & individual shooting, footwork \\
strength\_conditioning & 55 & 25 &  35 & weight room, plyometrics \\
rehab\_rest            & 10 &  3 &  10 & active recovery \\
\bottomrule
\end{tabular}
\end{table}

\section*{NeurIPS Paper Checklist}

\begin{enumerate}[leftmargin=*,label=\textbf{\arabic*.}]

\item \textbf{Claims}

Question: Do the main claims made in the abstract and introduction
accurately reflect the paper's contributions and scope?

Answer: [Yes]

Justification: The abstract's four claims (decomposition
(\S\ref{sec:decomposition}), completion gap (\S\ref{sec:completion}),
optimality gap (\S\ref{sec:optimality}), and NBA replication
(\S\ref{sec:replication})) are each supported by experiments in the
cited sections. The ``harmful horizon access'' claim is explicitly
scoped to PPO with a linear soft penalty in both the abstract and the
contributions list.

\item \textbf{Limitations}

Question: Does the paper discuss the limitations of the work
performed by the authors?

Answer: [Yes]

Justification: An explicit Limitations paragraph in
\S\ref{sec:discussion} covers: same-engine replication scope
(parameter-regime generality only), Proposition--testbed scope gap
(binary-action MDP vs.\ continuous multi-activity testbeds),
algorithm/penalty specificity of P2 (PPO with linear soft penalty
only), pre-registered nuanced result (initialization intervention),
ground-truth model assumption (exact transition function in Dyna),
partial observability of the trailing-window role-viability
constraint, and shared-trunk coupling between the share and effort
heads.

\item \textbf{Theory assumptions and proofs}

Question: For each theoretical result, does the paper provide the
full set of assumptions and a complete (and correct) proof?

Answer: [Yes]

Justification: Proposition~\ref{prop:step-zero} is stated under
Assumption~\ref{asm:cds} with the additional assumption
$0 < e_L < e_H \leq 1$, and proved in
Appendix~\ref{sec:synthetic-mdp}. The supporting
Lemma~\ref{lem:lipschitz} (Lipschitz bound on $V^\pi$) is stated and
proved in full. Scope (binary-action minimal MDP, direction of
expected first update only, not convergence) is demarcated in both
the main text and the appendix.

\item \textbf{Experimental result reproducibility}

Question: Does the paper fully disclose all the information needed
to reproduce the main experimental results of the paper to the
extent that it affects the main claims and/or conclusions of the
paper (regardless of whether the code and data are provided or not)?

Answer: [Yes]

Justification: Hyperparameters are in Appendix~\ref{sec:hyperparams};
biological parameters including explicit formulas for the
self-amplification factor $a(M)$, age-onset factor $f(\text{age})$,
proxy signal $S_t$, reward function $r_t$, and load decomposition are
in Appendix~\ref{sec:bio-params}; NBA parameters are in
Appendix~\ref{sec:nba-params}. Configuration files are released in
the supplementary repository.

\item \textbf{Open access to data and code}

Question: Does the paper provide open access to the data and code,
with sufficient instructions to faithfully reproduce the main
experimental results, as described in supplemental material?

Answer: [Yes]

Justification: Code, configuration files, and per-seed
\texttt{summary.json} result files are released alongside the paper
in the supplementary repository, including the \texttt{EnergyCareerEnv}
simulator, training scripts, DP solver, and analysis pipelines.

\item \textbf{Experimental setting/details}

Question: Does the paper specify all the training and test details
(e.g., data splits, hyperparameters, how they were chosen, type of
optimizer) necessary to understand the results?

Answer: [Yes]

Justification: Full training details are in
Appendix~\ref{sec:hyperparams}: optimizer (Adam,
$\text{lr}=3\times10^{-4}$), PPO clip ($\epsilon=0.2$), entropy and
value coefficients, rollout length, minibatch size, and training
steps. The role-viability rule and \texttt{no\_exit} flag semantics
are defined in \S\ref{sec:completion}. The fixed-share PPO-real
ablation protocol is reported in \S\ref{sec:completion}.

\item \textbf{Experiment statistical significance}

Question: Does the paper report error bars suitably and correctly
defined or other appropriate information about the statistical
significance of the experiments?

Answer: [Yes]

Justification: All main results include appropriate statistics:
standard deviations on exit age; 95\% bootstrap CIs on
$\Delta M_{\text{final}}$; Welch's $t$-test with explicit degrees of
freedom and Mann-Whitney $U$ for exit-age comparisons; 95\%
Clopper-Pearson CIs on basin-dominance rates; Fisher exact test for
the cross-environment dominance-rate comparison. The source of
variability (random seed) is stated throughout.

\item \textbf{Experiments compute resources}

Question: For each experiment, does the paper provide sufficient
information on the computer resources (type of compute workers,
memory, time of execution) needed to reproduce the experiments?

Answer: [Yes]

Justification: Appendix~\ref{sec:hyperparams} reports: each PPO/Dyna
run $\sim$13 min on a single CPU node; DP solve under 2 hr on a
single CPU; role-sensitivity sweep $\sim$20 HPC node-hours; NBA
sweeps $\sim$15 node-hours total.

\item \textbf{Code of ethics}

Question: Does the research conducted in the paper conform, in every
respect, with the NeurIPS Code of Ethics?

Answer: [Yes]

Justification: No human subjects, crowdsourcing, or proprietary data
are involved. The environments are calibrated exclusively to published
epidemiological aggregates. No individual-prediction API is released.

\item \textbf{Broader impacts}

Question: Does the paper discuss both potential positive societal
impacts and negative societal impacts of the work performed?

Answer: [Yes]

Justification: \S\ref{sec:discussion} discusses beneficial uses
(ergonomic interventions extending productive working life, public
health policy) and risks (age-discriminatory hiring or insurance
pricing), with three explicit mitigations: population-level synthetic
agents only, aggregate epidemiological calibration, and no
individual-prediction API in released code.

\item \textbf{Safeguards}

Question: Does the paper describe safeguards that have been put in
place for responsible release of data or models that have a high
risk for misuse?

Answer: [N/A]

Justification: The paper releases a simulation framework and analysis
code only, not a pre-trained model or scraped dataset. The simulator
produces synthetic population-level trajectories and poses no direct
misuse risk.

\item \textbf{Licenses for existing assets}

Question: Are the creators or original owners of assets (e.g., code,
data, models), used in the paper, properly credited and are the
license and terms of use explicitly mentioned and properly respected?

Answer: [Yes]

Justification: All biological and sports-medicine literature is cited
at point of use and in the parameter appendices. All code
dependencies (PyTorch, NumPy, etc.) are open-source. No proprietary
datasets or models are used.

\item \textbf{New assets}

Question: Are new assets introduced in the paper well documented and
is the documentation provided alongside the assets?

Answer: [Yes]

Justification: The supplementary repository contains the
\texttt{EnergyCareerEnv} simulator, training and DP solver scripts,
analysis pipelines, and configuration files for both testbeds, with
a README describing installation and reproduction steps.

\item \textbf{Crowdsourcing and research with human subjects}

Question: For crowdsourcing experiments and research with human
subjects, does the paper include the full text of instructions given
to participants and screenshots, if applicable, as well as details
about compensation (if any)?

Answer: [N/A]

Justification: The paper involves no crowdsourcing and no research
with human subjects. All environments are synthetic simulators
calibrated to published epidemiological data.

\item \textbf{Institutional review board (IRB) approvals or
equivalent for research with human subjects}

Question: Does the paper describe potential risks incurred by study
participants, whether such risks were disclosed to the subjects, and
whether Institutional Review Board (IRB) approvals (or an equivalent
approval/review based on the requirements of your country or
institution) were obtained?

Answer: [N/A]

Justification: No human subjects are involved; IRB approval is not
required.

\item \textbf{Declaration of LLM usage}

Question: Does the paper describe the usage of LLMs if it is an
important, original, or non-standard component of the core methods
in this research?

Answer: [N/A]

Justification: LLMs were used only for writing and editing assistance
and are not a component of the core methodology, scientific
contributions, or experimental results.

\end{enumerate}

\end{document}